\def\eqref#1{equation~\ref{#1}}
\def\1{\bm{1}}
\def\rvs{{\mathbf{s}}}
\def\rvx{{\mathbf{x}}}
\def\rvy{{\mathbf{y}}}
\def\rvz{{\mathbf{z}}}
\DeclareMathAlphabet{\mathsfit}{\encodingdefault}{\sfdefault}{m}{sl}
\SetMathAlphabet{\mathsfit}{bold}{\encodingdefault}{\sfdefault}{bx}{n}
\theoremstyle{definition}
\tikzset{mynode/.style={draw,circle, minimum size = 0.7cm}}
\definecolor{gblue}{RGB}{207,226,243}
\definecolor{gred}{RGB}{244,204,204}
\definecolor{gyellow}{RGB}{255,229,153}
\definecolor{gyellow2}{RGB}{252,229,205}
\definecolor{ggreen}{RGB}{217,234,211}
\definecolor{ggray}{RGB}{238,238,238} 
\definecolor{ggray2}{RGB}{81,84,87} 
\definecolor{gpurple}{RGB}{217,210,233}
\title{On the Fairness of Disentangled Representations}
\author[2,5]{Francesco Locatello}
\author[3]{Gabriele Abbati}
\author[4]{Tom Rainforth}
\author[5]{Stefan Bauer}
\author[5]{Bernhard Sch\"olkopf}
\author[1]{Olivier Bachem}
\affil[1]{Google Research, Brain Team}
\affil[2]{Dept. of Computer Science, ETH Zurich}
\affil[3]{Dept. of Engineering Science, University of Oxford}
\affil[4]{Dept. of Statistics, University of Oxford}
\affil[5]{Max-Planck Institute for Intelligent Systems}
\begin{document}

\maketitle

\begin{abstract}
\looseness=-1Recently there has been a significant interest in learning disentangled representations, as they promise increased interpretability, generalization to unseen scenarios and faster learning on downstream tasks. 
In this paper, we investigate the usefulness of different notions of disentanglement for improving the fairness of downstream prediction tasks based on representations.
We consider the setting where the goal is to predict a target variable based on the learned representation of high-dimensional observations (such as images) that depend on both the target variable and an \emph{unobserved} sensitive variable.
We show that in this setting both the optimal and empirical predictions can be unfair, even if the target variable and the sensitive variable are independent.
Analyzing the representations of more than \num{12600} trained state-of-the-art disentangled models, we observe that several disentanglement scores are consistently correlated with increased fairness, suggesting that disentanglement may be a useful property to encourage fairness when sensitive variables are not observed. 
\end{abstract}

\section{Introduction}
\looseness=-1In representation learning, observations are often assumed to be samples from a random variable $\rvx$ which is generated by a set of unobserved factors of variation $\rvz$~\cite{bengio2013representation,chen2016infogan,kulkarni2015deep,tschannen2018recent}. Informally, the goal of representation learning is to find a transformation $r(\rvx)$ of the data which is useful for different downstream classification tasks~\cite{bengio2013representation}. A recent line of work argues that disentangled representations offer many of the desired properties of useful representations. Indeed, isolating each independent factor of variation into the independent components of a representation vector should make it both interpretable and simplify downstream prediction tasks~\cite{bengio2013representation,bengio2007scaling,goodfellow2009measuring,higgins2018towards,lake2017building,lecun2015deep,lenc2015understanding,PetJanSch17,schmidhuber1992learning,suter2018interventional,tschannen2018recent,vanSteenkiste2019abstract, adel2018discovering, gondal2019transfer}.

\looseness=-1Previous work~\cite{kumar2017variational,locatello2018challenging} has alluded to a possible connection between the motivations of disentanglement and fair machine learning. Given the societal relevance of machine-learning driven decision processes, fairness has become a highly active field \cite{fairnessbook}. Assuming the existence of a complex causal graph with partially observed and potentially confounded observations~\cite{Kilbertusetal17}, sensitive protected attributes (e.g. gender, race, etc) can leak undesired information into a classification task in  different ways. For example, the inherent assumptions of the algorithm might cause discrimination towards protected groups, the data collection process might be biased or the causal graph itself might allow for unfairness because society is unfair~\cite{barocas2016big,calders2013unbiased,munoz2016big,pedreshi2008discrimination,podesta2014big,schreurs2008cogitas}. The goal of fair machine learning algorithms is to predict a target variable $\rvy$ through a classifier $\hat \rvy$ without being biased by some sensitive factors $\rvs$. The negative impact of $\rvs$ in terms of discrimination within the classification task can be quantified using a variety of fairness notions, such as demographic parity \cite{calders2009building,zliobaite2015relation}, individual fairness \cite{dwork2012fairness}, equalized odds or equal opportunity \cite{hardt2016equality,zafar2017fairness}, and concepts based on causal reasoning \cite{Kilbertusetal17,kusner2017counterfactual}. 

\looseness=-1In this paper, we investigate the downstream usefulness of disentangled representations through the lens of fairness.
For this, we consider the standard setup of disentangled representation learning, in which observations are the result of an (unknown) mixing mechanism of independent ground-truth factors of variation as depicted in Figure~\ref{fig:graph}.
To evaluate the learned representations $r(\rvx)$ of these observations, we assume that the set of ground-truth factors of variation include both a target factor $\rvy$, which we would like to predict from the learned representation, and an underlying sensitive factor $\rvs$, which we want to be fair to in the sense of demographic parity~\cite{calders2009building,zliobaite2015relation}, i.e. such that $p(\hat \rvy = y | \rvs=s_1)=p(\hat \rvy = y | \rvs=s_2) \ \forall y, s_1, s_2$. 
The key difference to prior work is that in this setting one never observes the sensitive variable $\rvs$ nor the other factors of variation except the target variable, which is itself only observed when learning the model for the downstream task. This setup is relevant when sensitive variables may not be recorded due to privacy reasons. 
Examples include learning general-purpose embeddings from a large number of images or building a world model based on video input of a robot.

\begin{figure}
\makebox[\linewidth][c]{%
    \centering
    \definecolor{mygreen}{RGB}{30,142,62}
	\definecolor{myred}{RGB}{217,48,37}
	\scalebox{0.7}{\begin{tikzpicture}
		\tikzset{
			myarrow/.style={->, >=latex', shorten >=1pt, line width=0.4mm},
		}
		\tikzset{
			mybigredarrow/.style={dashed, dash pattern={on 20pt off 10pt}, >=latex', shorten >=3mm, shorten <=3mm, line width=2mm, draw=myred},
		}
		\tikzset{
			mybiggreenarrow/.style={->, >=latex', shorten >=3mm, shorten <=3mm, line width=2mm, draw=mygreen},
		}
		\node[text width=2.5cm,align=center,minimum size=2.6em,draw,thick,rounded corners, fill=gyellow] (unkn_mixing) at (-1.75,0) {Unknown mixing};
		\node[text width=2.5cm,align=center,minimum size=2.6em,draw,thick,rounded corners, fill=gblue] (obs) at (1.75,0) {Observation $\rvx$};
        \node[text width=2.5cm,align=center,minimum size=2.6em,draw,thick,rounded corners, fill=gpurple] (repr) at (5.25,0) {Representation $r(\rvx)$};
		\node[text width=2.5cm,align=center,minimum size=2.6em,draw,thick,rounded corners, fill=gyellow2] (pred) at (8.75,0) {Prediction $\hat{\rvy}$};
		\node[text width=2.5cm,align=center,minimum size=2.6em,draw,thick,rounded corners, fill=ggray] (unobs_fact) at (-5.25,0) {Unobserved Factors};
		\node[text width=2.5cm,align=center,minimum size=2.6em,draw,thick,rounded corners, fill=ggreen] (target_v) at (-5.25,1) {Target variable $\rvy$};
		\node[text width=2.5cm,align=center,minimum size=2.6em,draw,thick,rounded corners, fill=gred] (sens_v) at (-5.25,-1) {Sensitive Variable $\rvs$};
		\draw[myarrow] (repr) -- (pred);
		\draw[myarrow] (unkn_mixing) -- (obs);
		\draw[myarrow] (obs) -- (repr);
		\draw[myarrow] (unobs_fact) -- (unkn_mixing);
		\draw[myarrow] (sens_v) -- (unkn_mixing);
		\draw[myarrow] (target_v) -- (unkn_mixing);
		\draw[mybiggreenarrow, -triangle 45] (pred.north) to [out=145, in=30] (target_v.north);
		\draw[mybigredarrow, -triangle 45] (sens_v.south) to [out=-30, in=-145] (pred.south);
		\node[text width=6cm,align=center,minimum size=2.6em,draw=none,] (accurate) at (1.0,2.25) {\textcolor{ggray2}{\large \textbf{ACCURATE}}};
		\node[text width=6cm,align=center,minimum size=2.6em,draw=none,] (accurate) at (1.1,-2.25) {\textcolor{ggray2}{\large \textbf{WHILE BEING FAIR}}};
	\end{tikzpicture}}}
    \caption{Causal graph and problem setting. We assume the observations $\rvx$ are manifestations of independent factors of variation. We aim at predicting the value of some factors of variation $\rvy$ without being influenced by the \textit{unobserved} sensitive variable $\rvs$. Even though target and sensitive variable are in principle independent, they are entangled in the observations by an unknown mixing mechanism. Our goal for fair representation learning is to learn a good representation $r(\rvx)$ so that any downstream classifier will be both accurate and fair. Note that the representation is learned without supervision and when training the classifier we do not observe and do not know which variables are sensitive.}
    \label{fig:graph}
\end{figure}
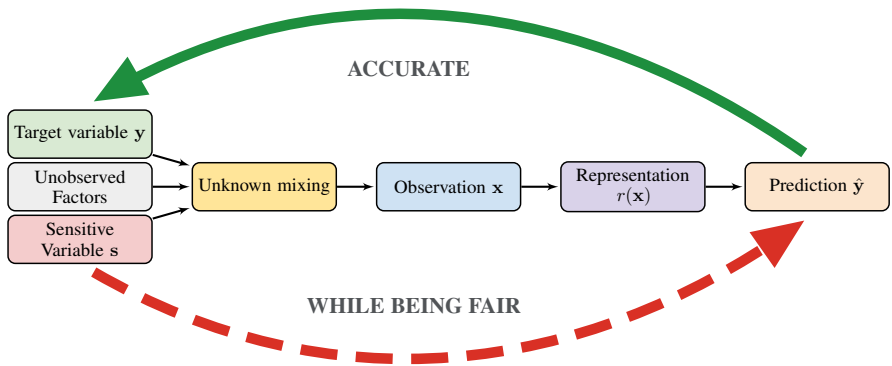

Our key contributions can be summarized as follows:
\begin{itemize}[leftmargin=*]
    \item We motivate the setup  of Figure~\ref{fig:graph} and discuss how general-purpose representations can lead to unfair predictions.
    In particular, we show theoretically that predictions can be unfair even if we use the Bayes optimal classifier and if the target variable and the sensitive variable are independent.
    Furthermore, we motivate why disentanglement in the representation may encourage fairness of the downstream prediction models.
    
    \item We evaluate the demographic parity of more than \num{90000} downstream prediction models trained on more than \num{10000} state-of-the-art disentangled representations on seven different data sets.
    Our results indicate that there are considerable dissimilarities between different  representations in terms of fairness, indicating that the representation used matters.
    \item We relate the fairness of the representations to six different disentanglement scores of the same representations and find that disentanglement, in particular when measured using the DCI Disentanglement score \cite{eastwood2018framework}, appears to be consistently correlated with increased fairness.
    \item We further investigate the relationship between fairness, the performance of the downstream models and the disentanglement scores.
    The fairness of the prediction also appears to be correlated to the accuracy of the downstream predictions, which is not surprising given that downstream accuracy is correlated with disentanglement.
    
\end{itemize}

\looseness=-1\paragraph{Roadmap:} In Section~\ref{sec:background}, we briefly review the state-of-the-art approaches to extract and evaluate disentangled representations. In Section~\ref{sec:fairness}, we highlight the role of the unknown mixing mechanism on the fairness of the classification. In Section~\ref{sec:dis_matter}, we describe our experimental setup and empirical findings. In Section~\ref{sec:rel_work} we briefly review the literature on disentanglement and fair representation learning. In Section~\ref{sec:conclusion}, we discuss our findings and their implications.

\section{Background on learning disentangled representations}\label{sec:background}
Consider the setup shown in Figure~\ref{fig:graph} where the observations $\rvx$ are caused by $k$ independent sources $z_1, \ldots, z_k$. The generative model takes the form of~\cite{pearl2009causality}:
\begin{align*}
p(\rvx, \rvz) = p(\rvx \mid \rvz)\prod_{i} p(z_i).
\end{align*}
Informally, disentanglement learning treats the generative mechanisms as latent variables and aims at finding a representation $r(\rvx)$ with independent components where a change in a dimension of $\rvz$ corresponds to a change in a dimension of $r(\rvx)$~\cite{bengio2013representation}. This intuitive definition can be formalized in a topological sense~\cite{higgins2018towards} and in the causality setting~\cite{suter2018interventional}. 
A large number of disentanglement scores measuring different aspects of disentangled representations have been proposed in recent years.

\textbf{Disentanglement scores.}
The \emph{BetaVAE} score~\cite{higgins2016beta} measures disentanglement by training a linear classifier to predict the index of a fixed factor of variation from the representation.
The \emph{FactorVAE} score~\cite{kim2018disentangling} corrects a failure case of the BetaVAE score using a majority vote classifier on the relative variance of each dimension of $r(\rvx)$ after intervening on $\rvz$.
The \emph{Mutual Information Gap (MIG)}~\cite{chen2018isolating} computes for each factor of variation the normalized gap on the top two entries in the matrix of pairwise mutual information between $\rvz$ and $r(\rvx)$.
The \emph{Modularity}~\cite{ridgeway2018learning} measures if each dimension of $r(\rvx)$ depends on at most one factor of variation using the matrix of pairwise mutual information between factors and representation dimensions.
The Disentanglement metric of~\cite{eastwood2018framework} (which we call \emph{DCI Disentanglement} following~\cite{locatello2018challenging}) is based on the entropy of the probability that a dimension of $r(\rvx)$ is useful for predicting $\rvz$. This probability can be estimated from the feature importance of a random forest classifier.
Finally, the \emph{SAP score}~\cite{kumar2017variational} computes the average gap in the classification error of the two most predictive latent dimensions for each factor.

\textbf{Unsupervised methods.}
State-of-the-art approaches for  unsupervised disentanglement learning are based on representations learned by VAEs~\cite{kingma2013auto}. For the representation to be disentangled, the loss is enriched with a regularizer that encourages structure in the aggregate encoder distribution~\cite{alemi2016deep,chen2016infogan,chen2018isolating,esmaeili2019structured,higgins2016beta,kim2018disentangling,mathieu2019disentangling}. In causality, it is often argued that the true generative model is the simplest factorization of the distribution of the variables in the causal graph~\cite{PetJanSch17}. Under this hypothesis, $\beta$-VAE~\cite{higgins2016beta} and AnnealedVAE~\cite{burgess2018understanding} limit the capacity of the VAE bottleneck so that it will be forced to learn disentangled representations. The Factor-VAE~\cite{kim2018disentangling} and $\beta$-TCVAE~\cite{chen2018isolating} enforce that the aggregate posterior $q(\rvz)$ is factorial by penalizing its total correlation.
The DIP-VAE~\cite{kumar2017variational} and approach of~\cite{mathieu2019disentangling} introduce a ``disentanglement prior'' for the aggregated posterior.
We refer to Appendix B of \cite{locatello2018challenging} and Section 3 of \cite{tschannen2018recent} for a more detailed description of these regularizers.

\section{The dangers of general purpose representations for fairness}\label{sec:fairness}
Our goal in this paper is to understand how disentanglement impacts the fairness of general purpose representations.
For this reason, we put ourselves in the simple setup of Figure~\ref{fig:graph} where we assume that the observations $\rvx$ depend on a set of independent ground-truth factors of variation through an unknown mixing mechanism.
The key goal behind general purpose representations is to learn a vector valued function $r(\rvx)$ that allows us to solve many downstream tasks that depend on the ground-truth factors of variation.
From a representation learning perspective, a good representation should thus extract most of the information on the factors of variation~\cite{bengio2013representation}, ideally in a way that enables easy learning from that representation, i.e., with few samples.

\looseness=-1As one builds machine learning models for different tasks on top of such general purpose representations, it is not clear how the properties of the representations relate to the fairness of the predictions.
In particular, for different downstream prediction tasks, there may be different sensitive variables that we would like to be fair to.
This is modeled in our setting of Figure~\ref{fig:graph} by allowing one ground-truth factor of variation to be the target variable $\rvy$ and another one to be the sensitive variable $\rvs$.\footnote{Please see Section~\ref{sec:unfair} for how this is done in the experiments.}

\looseness=-1There are two key differences to prior setups in the fairness literature:
First, we assume that one only observes the observations $\rvx$ when learning the representation $r(\rvx)$ and the target variable $\rvy$ only when solving the downstream classification task. 
The sensitive variable $\rvs$ and the remaining ground-truth factors of variation are not observed.
We argue that this is an interesting setting because for many large scale data sets labels may be scarce.
Furthermore, if we can be fair with respect to unobserved but independent ground-truth factors of variation -- for example by using disentangled representations, this might even allow us to avoid biases for sensitive factors that we are not aware of.
The second difference is that we assume that the target variable $\rvy$ and the sensitive variable $\rvs$ are independent.
While beyond the scope of this paper, it would be also interesting to study the setting where ground-truth factors of variations are dependent.

\looseness=-1\textbf{Why can representations be unfair in this setting?}
 Despite the fact that the target variable $\rvy$ and the sensitive variable $\rvs$ are independent may seem like a overly restrictive assumption, we argue that even in this setting fairness is non-trivial to achieve.
Since we only observe $\rvx$ or the learned representations $r(\rvx)$, the target variable $\rvy$ and the sensitive variable $\rvs$ may be conditionally dependent.
If we now train a prediction model based on $\rvx$ or $r(\rvx)$, there is no guarantee that predictions will be fair with respect to $\rvs$.

\looseness=-1There are additional considerations:
first, the following theorem shows that the fairness notion of demographic parity may not be satisfied even if we find the optimal prediction model (i.e., $p(\hat\rvy | \rvx) = p(\rvy | \rvx)$) on entangled representations (for example when the representations are the identity function, i.e. $r(\rvx)=\rvx$).
\begin{restatable}{theorem}{perfect}
\label{thm:perfect_fairness}
If $\rvx$ is entangled with $\rvs$ and $\rvy$, the use of a perfect classifier for $\hat\rvy$, i.e., $p(\hat\rvy | \rvx) = p(\rvy | \rvx)$, does not imply demographic parity, i.e., $p(\hat \rvy = y | \rvs=s_1)=p(\hat \rvy = y | \rvs=s_2), \forall y, s_1, s_2$.
\end{restatable}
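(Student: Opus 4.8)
The statement is a non-implication, so the plan is to exhibit an explicit joint distribution over the target $\rvy$, the sensitive variable $\rvs$, and the observation $\rvx$ that satisfies all the hypotheses --- $\rvy$ and $\rvs$ independent, $\rvx$ entangled with both --- yet under which the perfect classifier $p(\hat\rvy\mid\rvx)=p(\rvy\mid\rvx)$ violates demographic parity. A single well-chosen counterexample suffices to establish the ``does not imply''.

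First I would fix the simplest nontrivial model: let $\rvy,\rvs\in\{0,1\}$ be independent and uniform, so that $p(\rvy=y,\rvs=s)=1/4$ for every pair, and let the mixing mechanism produce $\rvx=\rvy+\rvs\in\{0,1,2\}$. This choice genuinely entangles $\rvy$ and $\rvs$ in $\rvx$, because the middle outcome $\rvx=1$ arises from two distinct parent configurations, $(\rvy,\rvs)=(0,1)$ and $(1,0)$, so $\rvx$ alone cannot disambiguate which factor was active. The extreme outcomes $\rvx=0$ and $\rvx=2$ are, by contrast, fully informative.

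Next I would compute the perfect classifier by Bayes' rule: $p(\rvy=1\mid\rvx=0)=0$ and $p(\rvy=1\mid\rvx=2)=1$, while the ambiguous middle gives $p(\rvy=1\mid\rvx=1)=1/2$. To test demographic parity I would marginalize over $\rvx$ within each sensitive group via $p(\hat\rvy=y\mid\rvs=s)=\sum_{x}p(\rvy=y\mid\rvx=x)\,p(\rvx=x\mid\rvs=s)$. For $\rvs=0$ the observation is $0$ or $1$ with equal probability, giving $p(\hat\rvy=1\mid\rvs=0)=\tfrac12\cdot 0+\tfrac12\cdot\tfrac12=\tfrac14$; for $\rvs=1$ it is $1$ or $2$, giving $p(\hat\rvy=1\mid\rvs=1)=\tfrac12\cdot\tfrac12+\tfrac12\cdot 1=\tfrac34$. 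Since $\tfrac14\neq\tfrac34$, demographic parity fails, which is what we wanted.

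The only subtlety --- the thing to get right rather than a genuine obstacle --- is to ensure the entanglement is ``real'', i.e.\ that the shift of the conditional law $p(\rvx\mid\rvs)$ across sensitive groups lands on a region where the posterior $p(\rvy\mid\rvx)$ is nonconstant. If $\rvx$ separately revealed $\rvy$ and $\rvs$ (a disentangled or invertible mixing), the posterior would depend only on the $\rvy$-coordinate, the marginalization would collapse to $p(\rvy=y)$ in both groups, and parity would be restored; this is precisely the contrast the paper wants to foreshadow. I would therefore make the overlap at $\rvx=1$ explicit in the construction, and optionally remark that \emph{any} mixing possessing such an ambiguous fiber yields the same conclusion, so the counterexample is representative rather than pathological.
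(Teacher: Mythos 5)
Your proof is correct and takes essentially the same approach as the paper: an explicit counterexample with two independent Bernoulli factors, a deterministic non-invertible mixing, and the Markov factorization $p(\hat\rvy, \rvx \mid \rvs) = p(\hat\rvy \mid \rvx)\,p(\rvx \mid \rvs)$ to evaluate demographic parity. The only differences are cosmetic: the paper uses $\rvx = \min(\rvy,\rvs)$ with general parameters and argues by contradiction, whereas you use $\rvx = \rvy + \rvs$ with uniform marginals and verify the violation directly ($\tfrac14 \neq \tfrac34$), which is arguably cleaner.
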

The proof is provided in Appendix~\ref{sec:proof}.
While this result provides a worst-case example, it should be interpreted with care.
In particular, such instances may not allow for good and fair predictions regardless of the representations\footnote{In this case, even properties of representations such as disentanglement may not help.} and real world data may satisfy additional assumptions not satisfied by the provided counter example.

Second, the unknown mixing mechanism that relates $\rvy$, $\rvs$ to $\rvx$ may be highly complex and in practice the downstream learned prediction model will likely not be equal to the theoretically optimal prediction model $p(\hat\rvy | r(\rvx))$.
As a result, the downstream prediction model may be unable to properly invert the unknown mixing mechanism and successfully separate $\rvy$ and $\rvs$, in particular as it may not be incentivized to do so.
Finally, implicit biases and specific structures of the downstream model may interact and lead to different overall predictions for different sensitive groups in $\rvs$.

\paragraph{Why might disentanglement help?}
The key idea why disentanglement may help in this setting is that disentanglement promises to capture information about different generative factors in different latent dimensions.
This limits the mutual information between different code dimensions and encourages the predictions to depend only on the latent dimensions corresponding to the target variable and not to the one corresponding to the sensitive ground-truth factor of variation.
More formally, in the context of Theorem~\ref{thm:perfect_fairness}, consider a disentangled representation where the two factors of variations $\rvs$ and $\rvy$ are separated in independent components (say $r(\rvx)_y$ only depends on $\rvy$ and $r(\rvx)_s$ on $\rvs$).
Then, the optimal classifier can learn to ignore the part of its input which is independent of $\rvy$ since $p(\hat\rvy | r(\rvx)) = p(\rvy | r(\rvx)) = p(\rvy | r(\rvx)_y, r(\rvx)_s) =  p(\rvy | r(\rvx)_y)$ as $\rvy$ is independent from $r(\rvx)_s$. 
While such an optimal classifier on the representation $r(\rvx)$ might be fairer than the optimal classifier on the observation $\rvx$, it may also have a lower prediction accuracy.

\section{Do disentangled representations matter?}\label{sec:dis_matter}

\paragraph{Experimental conditions}
We adopt the setup of~\cite{locatello2018challenging}, which offers the most extensive benchmark comparison of disentangled representations to date. Their analysis spans seven datasets: in four of them (\textit{dSprites}~\cite{higgins2016beta}, \textit{Cars3D}~\cite{reed2015deep}, \textit{SmallNORB}~\cite{lecun2004learning} and \textit{Shapes3D}~\cite{kim2018disentangling}), a deterministic function of the factors of variation is incorporated into the mixing process; they further introduce three additional variants of dSprites, \textit{Noisy-dSprites}, \textit{Color-dSprites}, and \textit{Scream-dSprites}. 
In the latter datasets, the mixing mechanism contains a random component that takes the form of noisy pixels, random colors and structured backgrounds from the \textit{scream} painting. 
Each of these seven datasets provides access to the generative model for evaluation purposes. 
Our experimental pipeline works in three stages. First, we take the \num{12600} pre-trained models of~\cite{locatello2018challenging}, which cover a large number of hyperparameters and random seeds for the most prominent approaches: $\beta$-VAE, AnnealedVAE, Factor-VAE, $\beta$-TCVAE, DIP-VAE-I and II. 
These methods are trained on the raw data without any supervision. 
Details on architecture, hyperparameter, implementation of the methods can be found in Appendices B, C, G, and H of~\cite{locatello2018challenging}.
In the second stage, we assume to observe a target variable $\rvy$ that we should predict from the representation while we do not observe the sensitive variable $\rvs$. 
For each trained model, we consider each possible pair of factors of variation as target and sensitive variables.
For the prediction, we consider the same gradient boosting classifier~\cite{friedman2001greedy} as in~\cite{locatello2018challenging} which was trained on \num{10000} labeled examples (subsequently denoted by GBT10000) and which achieves higher accuracy than the cross-validated logistic regression.  
In the third stage, we observe the values of all the factors of variations and have access to the whole generative model. 
With this we compute the disentanglement metrics and use the following score to measure the unfairness of the predictions
\begin{align*}
    \texttt{unfairness}(\hat \rvy) = \frac{1}{| S |} \sum_s TV(p(\hat \rvy), p(\hat \rvy  \mid \rvs = s)) \ \forall \ y
\end{align*}
where $TV$ is the total variation. 
In other words, we compare the average total variation of the prediction after intervening on $\rvs$, thus directly measuring the violation of demographic parity.
The reported unfairness score for each trained representation is the average unfairness of all downstream classification tasks we considered for that representation. 

\begin{figure}
   \begin{center}
    \begin{subfigure}{0.4\textwidth}
    \includegraphics[width=\textwidth]{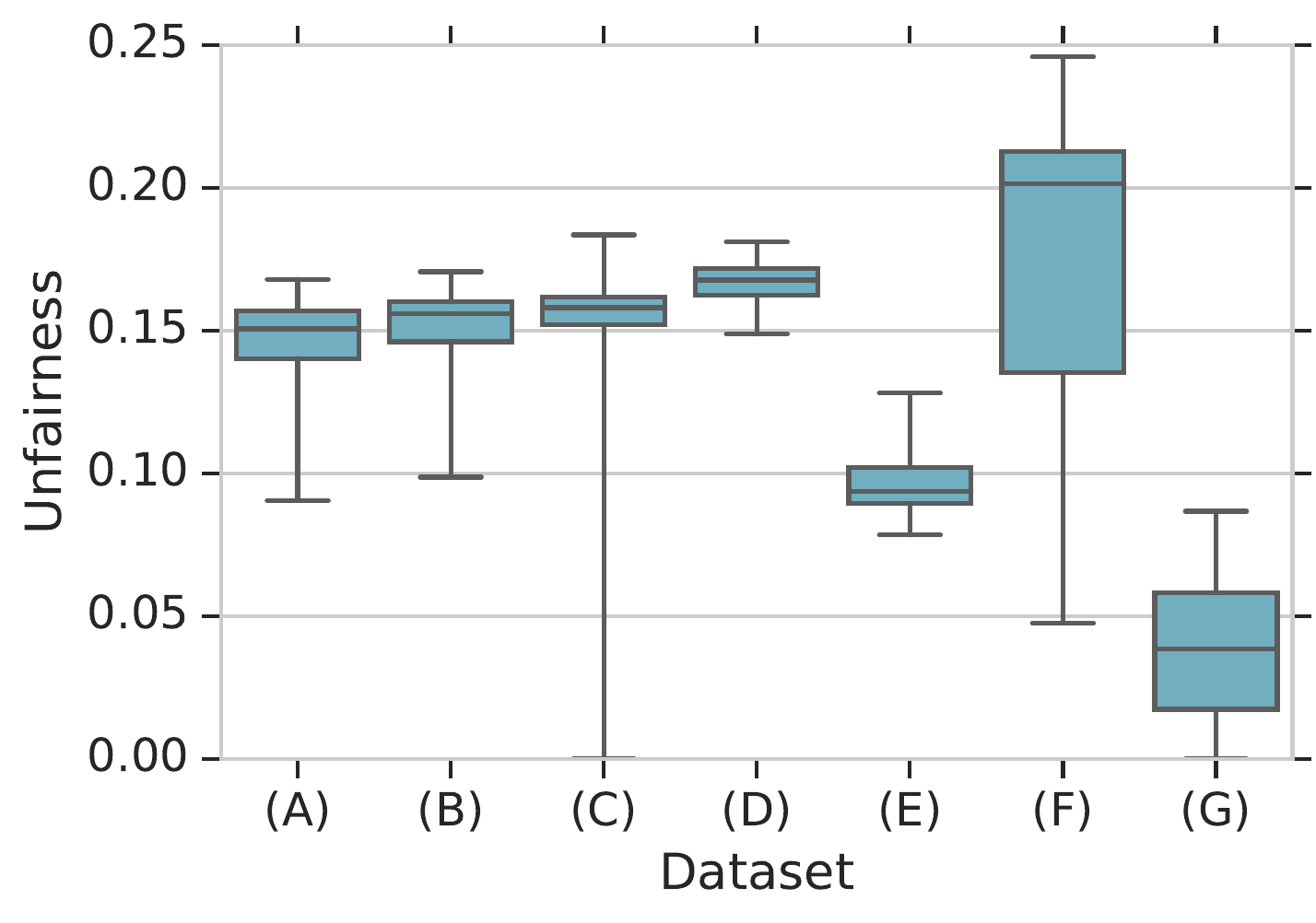}
    \end{subfigure}
    \begin{subfigure}{0.4\textwidth}
    \includegraphics[width=\textwidth]{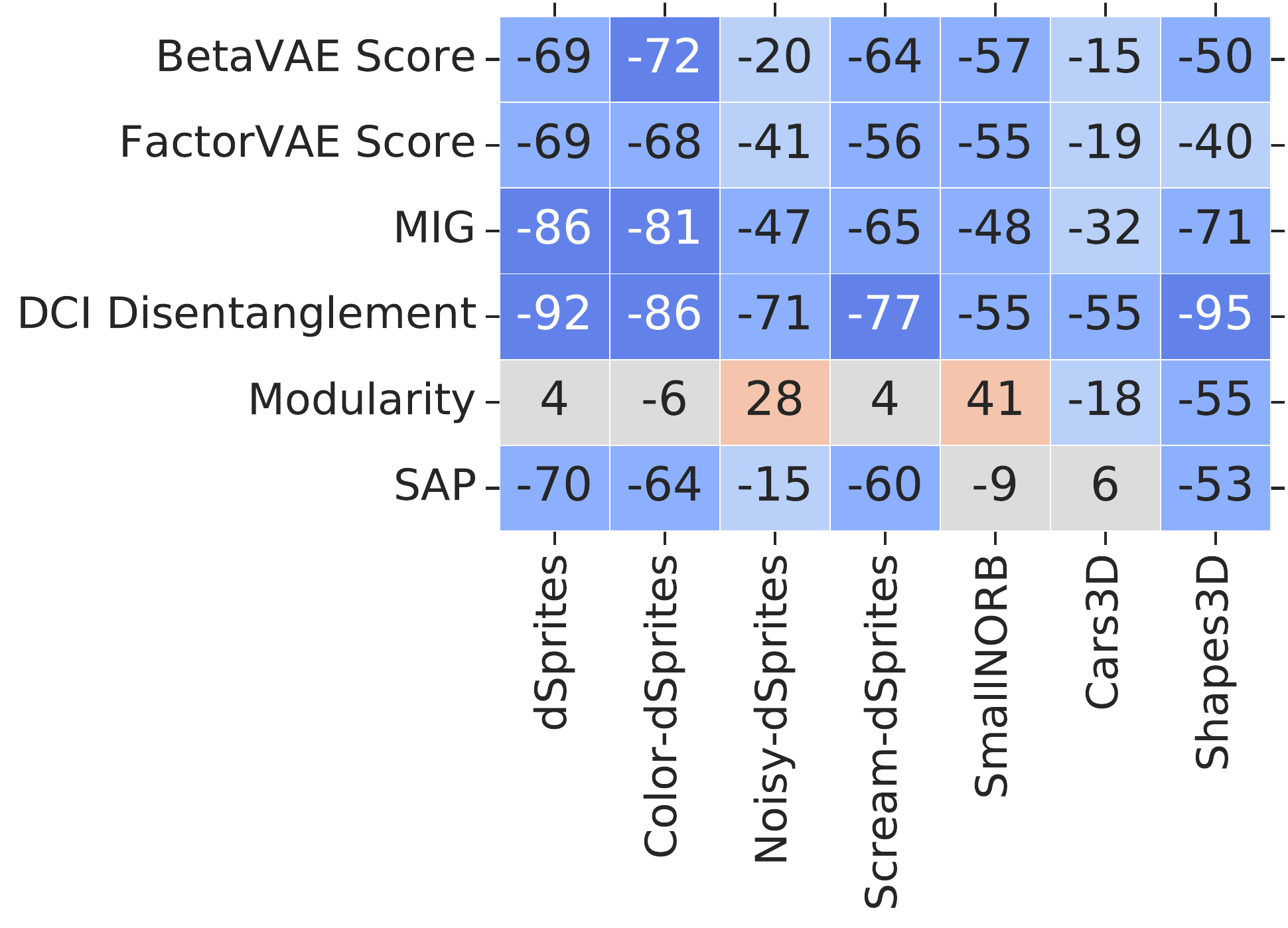}
    \end{subfigure}
    \end{center}
    \caption{(Left) Distribution of unfairness for learned representations. Legend: dSprites = (A), Color-dSprites = (B), Noisy-dSprites = (C), Scream-dSprites = (D), SmallNORB = (E), Cars3D = (F), Shapes3D = (G). (Right) Rank correlation of unfairness and disentanglement scores on the various data sets.}
     \label{fig:Unfairness}
     \centering
    \includegraphics[width=\textwidth]{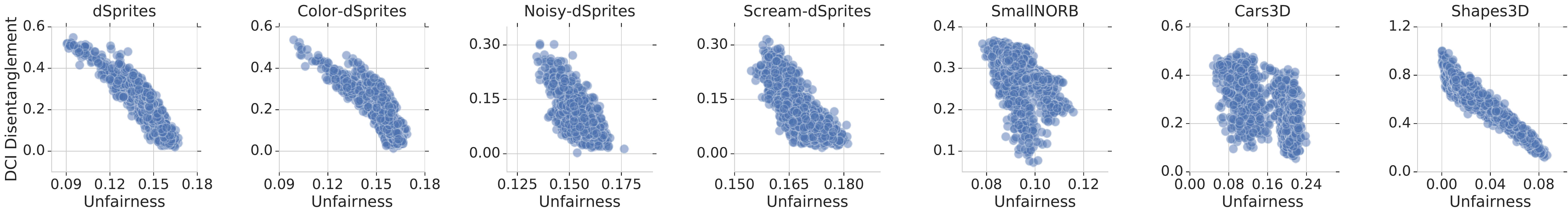}
    \caption{Unfairness of representations versus DCI Disentanglement on the different data sets.
    }
    \label{fig:Unfairness_vs_disentanglement_single}
\end{figure}

\subsection{The unfairness of general purpose representations and the relation to dientanglement}\label{sec:unfair}

In Figure~\ref{fig:Unfairness} (left), we show the distribution of unfairness scores for different representations on different data sets.
We clearly observe that learned representations can be unfair, even in the setting where the target variable and the sensitive variable are independent.
In particular, the total variation can reach as much as $15\%-25\%$ on five out of seven data sets.
This confirms the importance of trying to find general-purpose representations that are less unfair.

We also observe in Figure~\ref{fig:Unfairness} (left) that there is considerable spread in unfairness scores for different learned representations.
This indicates that the specific representation used matters and that predictions with low unfairness can be achieved.
To investigate whether disentanglement is a useful property to guarantee less unfair representations, we show rank correlation between a wide range of disentanglement scores and the unfairness score in Figure~\ref{fig:Unfairness} (right).
We observe that all disentanglement scores except Modularity appear to be consistently correlated with a lower unfairness score for all data sets.
While the considered disentanglement metrics (except Modularity) have been found to be correlated (see~\cite{locatello2018challenging}), we observe significant differences in between scores:
Figure~\ref{fig:Unfairness} (right) indicates that DCI Disentanglement is correlated the most followed by the Mutual Information Gap, the BetaVAE score, the FactorVAE score, the SAP score and finally Modularity.
The strong correlation of DCI Disentanglement is confirmed by Figure~\ref{fig:Unfairness_vs_disentanglement_single} where we plot the Unfairness score against the DCI Disentanglement score for each model. Again, we observe that the large gap in unfairness seem to be related to differences in the representation.
We show the corresponding plots for all metrics in Figure~\ref{fig:Unfairness_vs_disentanglement} in the Appendix.

These results provide an encouraging case for disentanglement being helpful in finding fairer representations.
However, they should be interpreted with care: 
Even though we have considered a diverse set of methods and disentangled representations, the computed correlation scores depend on the distribution of considered models.
If one were to consider an entirely different set of methods, hyperparameters and corresponding representations, the observed relationship may differ.

\subsection{Adjusting for downstream performance}\label{sec:adj_unfair}
\begin{figure}
    \begin{center}
    \centering
    \includegraphics[width=\textwidth]{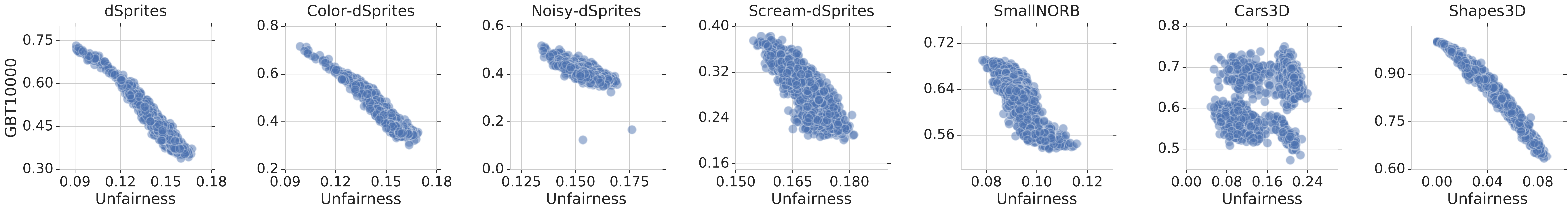}
    \caption{Unfairness of representations versus downstream accuracy on the different data sets.}
    \label{fig:GBT_vs_unfairness}
    \begin{subfigure}{0.4\textwidth}
    \centering
    \includegraphics[width=\textwidth]{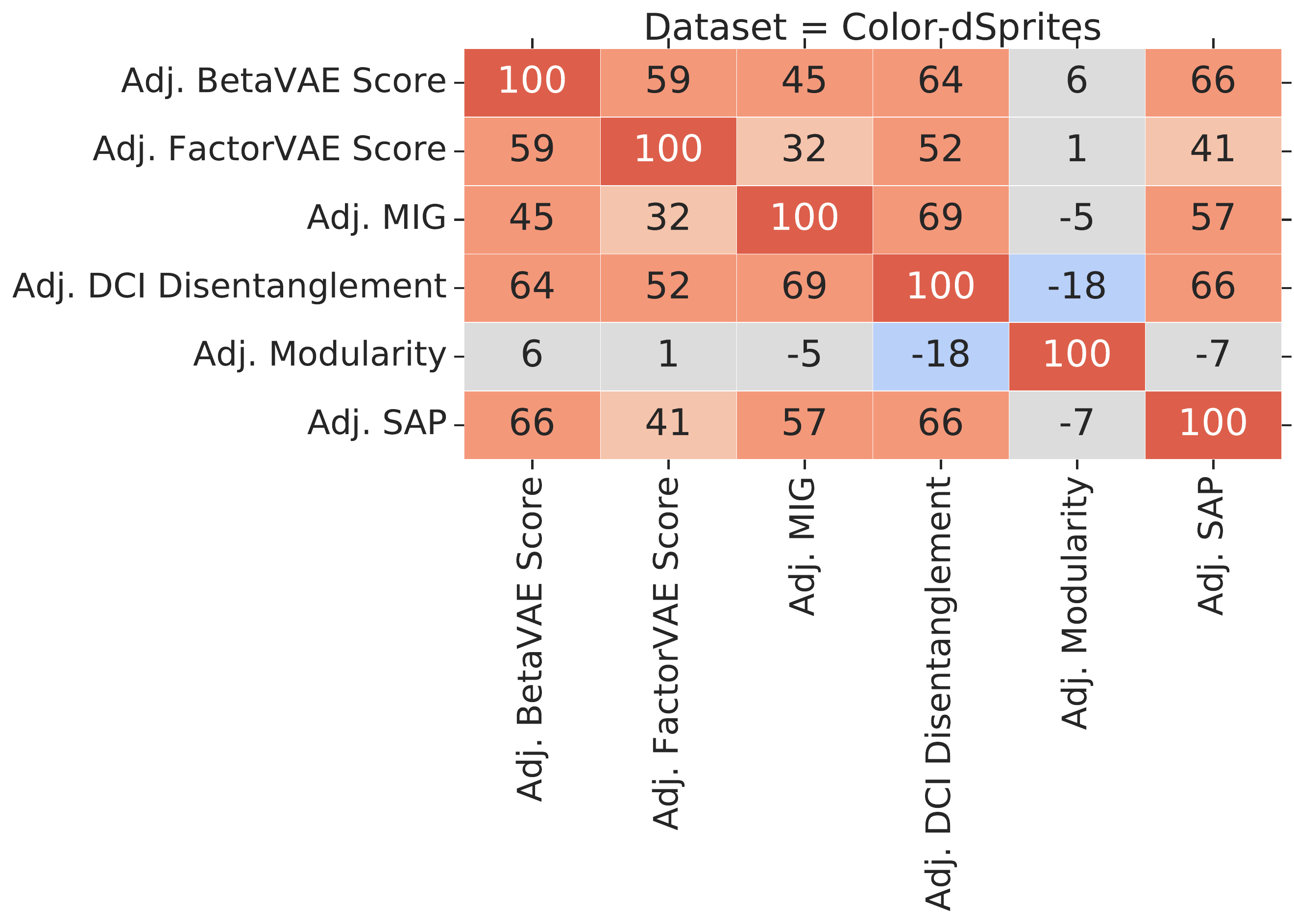}
    \end{subfigure}%
    \hspace{2mm}
    \begin{subfigure}{0.4\textwidth}
    \centering
    \includegraphics[width=\textwidth]{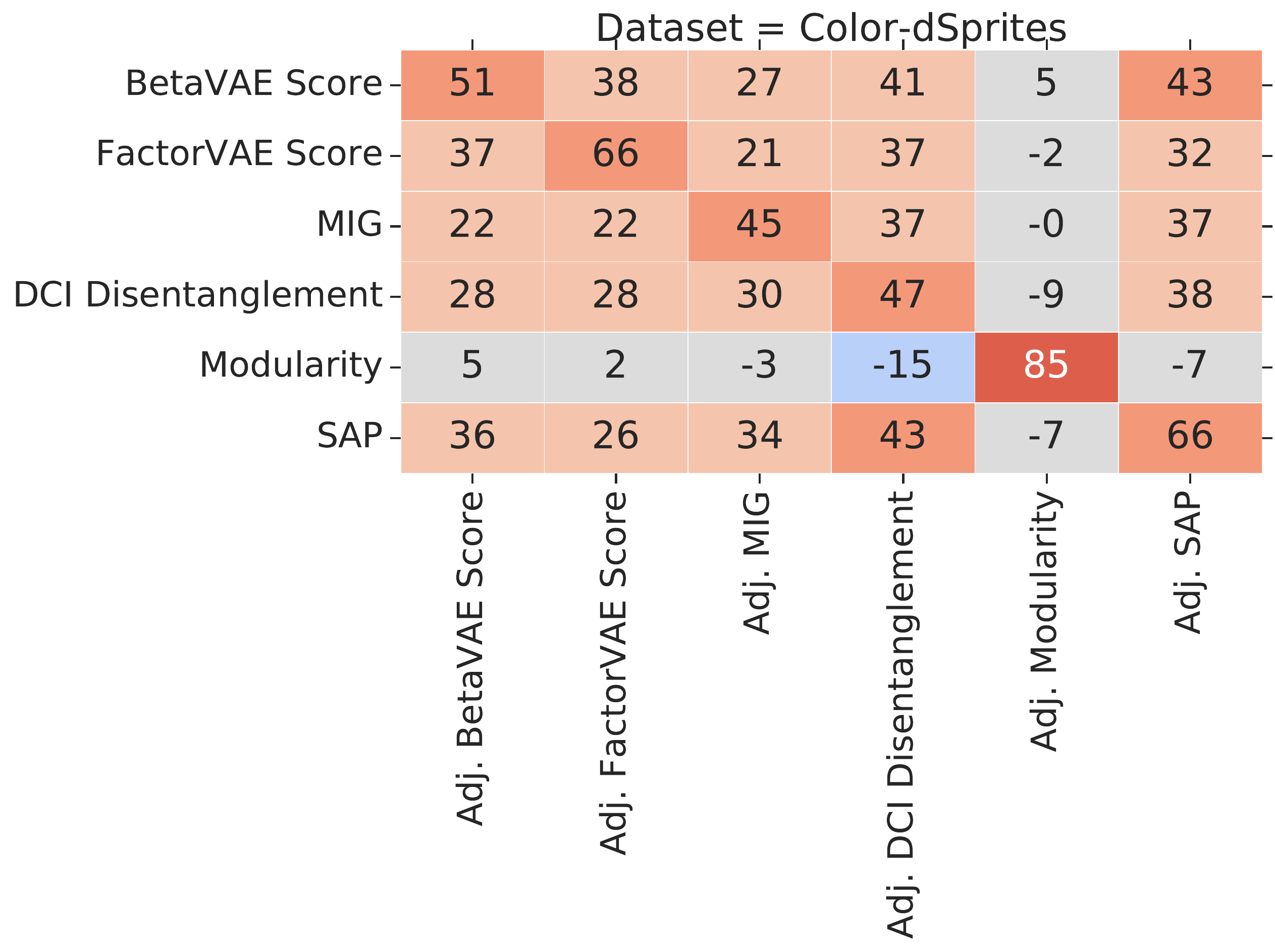}
     \end{subfigure}
     \end{center}
    \caption{Rank correlation between the adjusted disentanglement scores (left) and between original scores and the adjusted version (right).}
    \label{fig:adjusted_scores}
\end{figure}

Prior work~\cite{locatello2018challenging} has observed that disentanglement metrics are correlated with how well ground-truth factors of variations can be predicted from the representation using gradient boosted trees.
It is thus not surprising that the unfairness of a representation is also consistently correlated to the average accuracy of a gradient boosted trees classifier using \num{10000} samples (see Figure~\ref{fig:GBT_vs_unfairness}).
In this section, we investigate whether disentanglement is also correlated with a higher fairness if we compare representations with the same accuracy as measured by GBT10000 scores. Given two representations with the same downstream performance, is the more disentangled one also more fair?
The key challenge is that for a given representation there may not be other ones with exactly the same downstream performance.

For this, we adjust all the disentanglement scores and the unfairness score for the effect of downstream performance. 
We use a k-nearest neighbors regression from Scikit-learn~\cite{scikitlearn} to predict, for any model, each disentanglement score and the unfairness from its five nearest neighbor in terms of GBT10000 (which we write as $N(\text{GBT10000})$). 
This can be seen as a one-dimensional non-parametric estimate of the disentanglement score (or fairness score) based on the GBT10000 score.
The adjusted metric is computed as the residual score after the average score of the neighbors is subtracted, namely
\begin{align*}
\texttt{Adj. Metric} = \texttt{Metric} - \frac{1}{5}\sum_{i \in N(\text{GBT10000})} \texttt{Metric}_{i}
\end{align*}
Intuitively, the adjusted metrics measure how much more disentangled (fairer) a given representation is compared to an average representation with the same downstream performance.

In Figure~\ref{fig:adjusted_scores} (left), we observe that the rank correlation between the adjusted disentanglement scores (except Modularity) on Color-dSprites is consistenly positive.
This indicates that the adjusted scores do measure a similar property of the representation even when adjusted for performance. This result is consistent across data sets (Figure~\ref{fig:adj_metrics} of the Appendix). The only exception appears to be SmallNORB, where the adjusted DCI Disentanglement, MIG and SAP score correlate with each other but do not correlate well with the BetaVAE and FactorVAE score (which only correlate with each other). On Shapes3D we observe a similar result, but the correlation between the two groups of scores is stronger than on SmallNORB. 
Similarly, Figure~\ref{fig:adjusted_scores} (right) shows the rank correlation between the disentanglement metrics and their adjusted versions.
As expected, we observe that there still is a significant positive correlation. 
This indicates the adjusted scores still capture a significant part of the unadjusted score. We observe in Figure~\ref{fig:adj_metrics_vs_metrics} of the Appendix that this result appears to be consistent across the different data sets, again with the exception of SmallNORB.
As a sanity check, we finally confirm by visual inspection that the adjusted metrics still measure disentanglement. In Figure~\ref{figure:latent_traversal}, we plot latent traversals for the model with highest adjusted MIG score on Shapes3D and observe that the model appears well disentangled. 

\begin{figure}
\begin{center}
    {\adjincludegraphics[scale=0.4, trim={0 {0.9\height} 0 0}, clip]{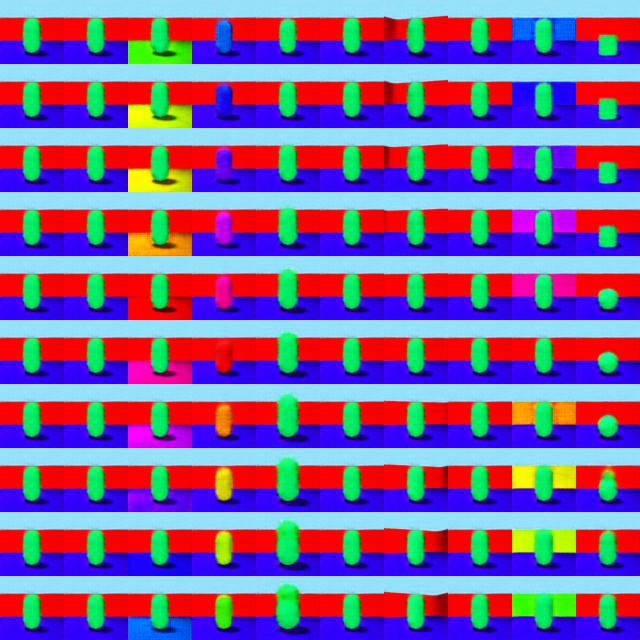}}\vspace{-0.3mm}
    {\adjincludegraphics[scale=0.4, trim={0 {.4\height} 0 {.5\height}}, clip]{figures/traversals4.jpg}}\vspace{-0.3mm}
    {\adjincludegraphics[scale=0.4, trim={0 0 0 {.9\height}}, clip]{figures/traversals4.jpg}}\vspace{2mm}
\end{center}
    \caption{Latent traversals (each column corresponds to a different latent variable being varied) on Shapes3D for the model with best adjusted MIG. }\label{figure:latent_traversal}
    \vspace{-4mm}
\end{figure}
\begin{wrapfigure}{l}{0.42\textwidth}
    \includegraphics[width=0.42\textwidth]{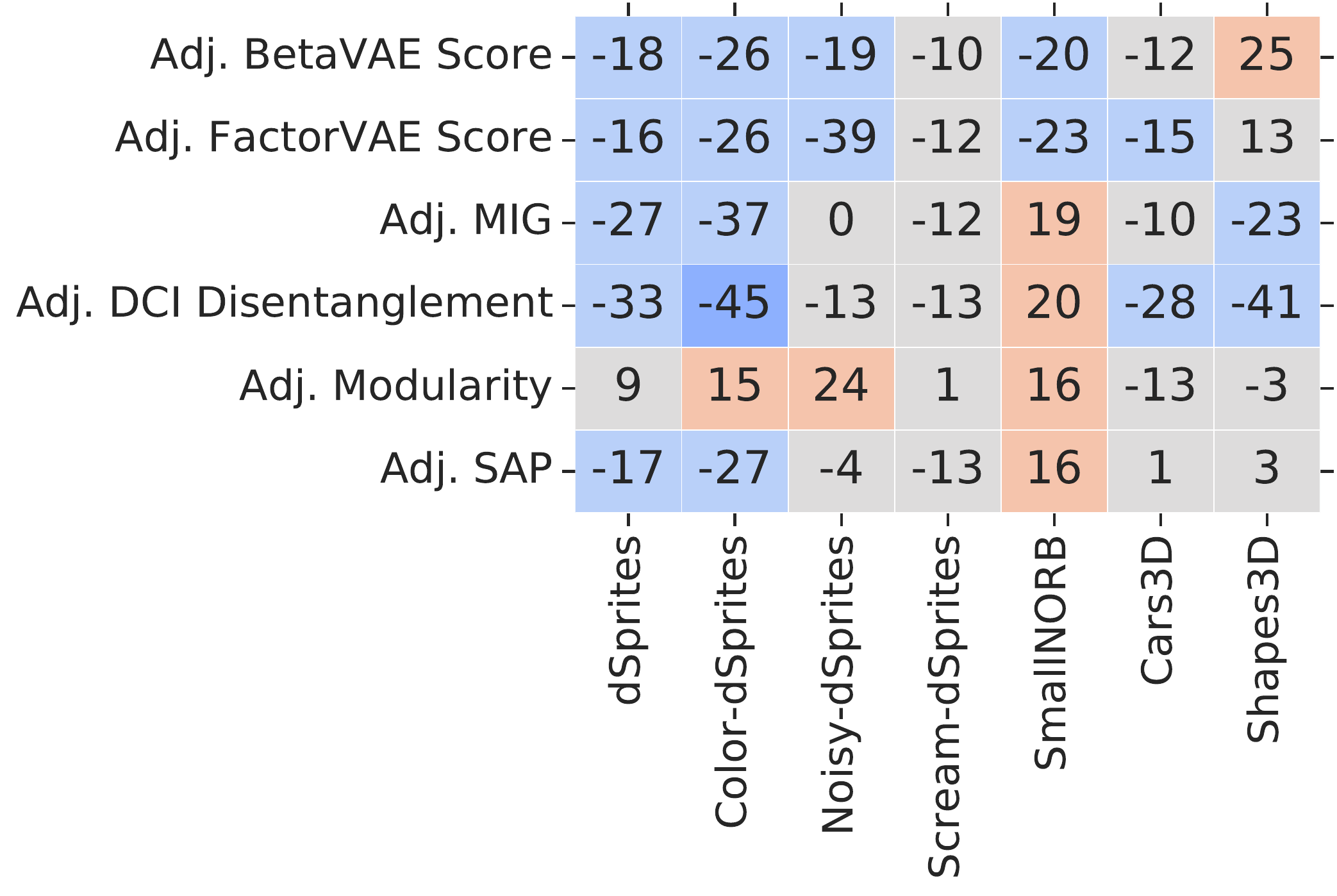}
    \caption{\looseness=-1Rank correlation of unfairness and disentanglement scores on the various data sets (left). Rank correlation of adjusted unfairness and adjusted disentanglement scores on the various data sets (right).}
    \label{fig:rank_fair_metrics}
    \vspace{-5mm}
\end{wrapfigure}
\looseness=-1Finally, Figure~\ref{fig:rank_fair_metrics} shows the rank correlation between the adjusted disentanglement scores and the adjusted fairness score for each of the data sets.
Overall, we observe that higher disentanglement still seems to be correlated with an increased fairness, even when accounting for downstream performance.
Exceptions appear to be the adjusted Modularity score, the  adjusted BetaVAE and the FactorVAE score on Shapes3D, and the adjusted MIG, DCI Disentanglement, Modularity and SAP on SmallNORB.
As expected, the correlations appear to be weaker than for the unadjusted scores (see Figure~\ref{fig:Unfairness} (right)) but we still observe some residual correlation.

\paragraph{How do we identify fair models?}
In this section, we observed that disentangled representations allow to train fairer classifiers, regardless of their accuracy. This leaves us with the question of how can we find fair representations? \cite{locatello2018challenging} showed that without access to supervision or inductive biases, disentangled representations cannot be identified. However, existing methods heavily rely on inductive biases such as architecture, hyperparameter choices, mean-field assumptions, and smoothness induced through randomness~\cite{mathieu2019disentangling,rolinek2018variational,stuhmer2019isavae}. In practice, training a large number of models with different losses and hyperparameters will result in a large number of different representations, some of which might be more disentangled than others as can be seen for example in Figure~\ref{fig:Unfairness_vs_disentanglement_single}. From Theorem~\ref{thm:perfect_fairness}, we know that optimizing for accuracy on a fixed representation does not guarantee to learn a fair classifier as the demographic parity theoretically depends on the representation when the sensitive variable is not observed. 

When we fix a classification algorithm, in our case GBT10000, and we train it over a variety of representations with different degrees of disentanglement we obtain both different degrees of fairness and downstream performance. If the disentanglement of the representation is the \textit{only} confounder between the performance of the classifier and its fairness as depicted in Figure~\ref{fig:disent_cause}, the classification accuracy may be used as a proxy for fairness. 
To test whether this holds in practice, we perform the following experiment. We sample a data set, a seed for the unsupervised disentanglement models and among the factors of variations we sample one to be $\rvy$ and one to be $\rvs$. Then, we train a classifier predicting $\rvy$ from $r(\rvx)$ using all the models trained on that data set on the specific seed. We compare the unfairness of the classifier achieving highest prediction accuracy on $\rvy$ with a randomly chosen classifier from the ones we trained. We observe that the classifier selected using test accuracy is also fairer 84.2\% of the times. We remark that this result explicitly make use of a large amount of representations of different quality on which we train the same classification algorithm. Under the assumption of Figure~\ref{fig:disent_cause}, the disentanglement of the representation is the only difference explaining different predictions, the best performing classifier is also more fair than one trained on a different representation. Since disentanglement is likely not the only confounder, model selection based on downstream performance is not guaranteed to always be fairer than random model selection.

\vspace{-1mm}
\section{Related Work}\label{sec:rel_work}
\vspace{-1mm}
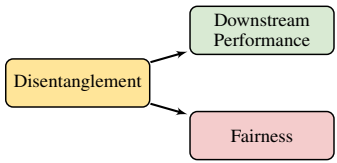
\begin{wrapfigure}{l}{0.42\textwidth}
\makebox[0.42\textwidth][c]{%
    \centering
    \definecolor{mygreen}{RGB}{30,142,62}
	\definecolor{myred}{RGB}{217,48,37}
	\scalebox{0.7}{\begin{tikzpicture}
		\tikzset{
			myarrow/.style={->, >=latex', shorten >=1pt, line width=0.4mm},
		}
		\tikzset{
			mybigredarrow/.style={dashed, dash pattern={on 20pt off 10pt}, >=latex', shorten >=3mm, shorten <=3mm, line width=2mm, draw=myred},
		}
		\tikzset{
			mybiggreenarrow/.style={->, >=latex', shorten >=3mm, shorten <=3mm, line width=2mm, draw=mygreen},
		}
		\node[text width=2.5cm,align=center,minimum size=2.6em,draw,thick,rounded corners, fill=gyellow] (disent) at (-5.25,0) {Disentanglement};
		\node[text width=2.5cm,align=center,minimum size=2.6em,draw,thick,rounded corners, fill=ggreen] (fair) at (-1.75,1) {Downstream Performance};
		\node[text width=2.5cm,align=center,minimum size=2.6em,draw,thick,rounded corners, fill=gred] (down) at (-1.75,-1) {Fairness};
		\draw[myarrow] (disent) -- (fair);
		\draw[myarrow] (disent) -- (down);
		
		
	\end{tikzpicture}}}
    \caption{If disentanglement is a causal parent of downstream performance and fairness and there are no hidden confounders, then the former can be used as a proxy for the latter.}
    \label{fig:disent_cause}
\end{wrapfigure}
\looseness=-1Ideas related to disentangling the factors of variations have a long tradition in machine learning, dating back to the non-linear ICA literature~\cite{comon1994independent,bach2002kernel,jutten2003advances, hyvarinen2016unsupervised,hyvarinen1999nonlinear,hyvarinen2018nonlinear,gresele2019incomplete}.
Disentangling pose from content and content from motion are also classical computer vision problems that have been tackled with various degrees of supervision and inductive bias~\cite{yang2015weakly,li2018disentangled,hsieh2018learning,fortuin2018deep,deng2017factorized,goroshin2015learning,hyvarinen2016unsupervised}. In this paper, we intend disentanglement in the sense of~\cite{bengio2013representation,suter2018interventional,higgins2018towards, locatello2018challenging}.~\cite{locatello2018challenging} recently proved that without access to supervision or inductive biases, disentanglement learning is impossible as disentangled models cannot be identified. In this paper, we evaluate the representation using the supervised downstream task where both target and sensitive variables are observed.
Semi-supervised variants have been extensively studied during the years. \cite{reed2014learning,cheung2014discovering,mathieu2016disentangling,narayanaswamy2017learning,kingma2014semi,klys2018learning,adel2018discovering} assume partially observed factors of variation that should be disentangled from the other unobserved ones. Weaker forms of supervision like relational information or additional assumptions on the effect of the factors of variation were also studied~\cite{hinton2011transforming,cohen2014learning,karaletsos2015bayesian,goroshin2015learning,whitney2016understanding,fraccaro2017disentangled,denton2017unsupervised,hsu2017unsupervised,li2018disentangled,locatello2018clustering, kulkarni2015deep,ruiz2019learning,bouchacourt2017multi} and applied in the sequential data and reinforcement learning settings~\cite{thomas2018disentangling,steenbrugge2018improving,laversanne2018curiosity,nair2018visual,higgins2017darla,higgins2018scan}. Overall, the disentanglement literature is interested in isolating the effect of every factor of variation regardless of how the representation should be used downstream.

\looseness=-1On the fairness perspective, representation learning has been used as a mean to separate the detrimental effects that labeled sensitive factors could have on the classification task~\cite{mcnamara2017provably,hardt2016equality}.
We remark that this setup is different from what we consider in this paper, as we do not assume access to any labeled information when learning a representation. In particular, we do not assume to know what the downstream task will be and what are the sensible variables (if any).
\cite{dwork2012fairness,zemel2013learning} introduce the idea that a fair representation should preserve all information about the individual's attributes except for the membership to protected groups.
In practice, 
\cite{louizos2015variational} extends the VAE objective with a Maximum Mean Discrepancy \cite{gretton2007kernel} to ensure independence between the latent representation and the sensitive factors.
\cite{calmon2017optimized} introduces the idea of data pre-processing as a tool to control for downstream discrimination. The authors of \cite{song2018learning} instead propose an information-theoretic approach in which the mutual information between the data and the representation is maximized, while the one between the sensitive attributes and the representation is minimized. Furthermore, there are several approaches that employ adversarial \cite{goodfellow2014generative} training to avoid information leakage between the sensitive attributes and the representation \cite{edwards2015censoring,madras2018learning,zhang2018mitigating}. Finally, representation learning has recently proved to be useful in counterfactual fairness \cite{kusner2017counterfactual,johansson2016learning}.

\vspace{-1mm}
\section{Conclusion}\label{sec:conclusion}
\vspace{-1mm}

\looseness=-1In this paper, we observe the first empirical evidence that disentanglement might prove beneficial to learn fair representations, providing evidence supporting the conjectures of~\cite{locatello2018challenging,kumar2017variational}. 
We show that general purpose representations can lead to substantial unfairness, even in the setting where both the sensitive variable and target variable are independent and one only has access to observations that depend on both of them. 
Yet, the choice of representation appears to be crucial as we find that that increased disentanglement of a representation is consistently correlated with increased fairness on downstream prediction tasks across a wide range of representations and data sets.
Furthermore, we discuss the relationship between fairness, downstream accuracy and disentanglement and find evidence that the correlation between disentanglement metrics and the unfairness of the downstream prediction tasks appears to also hold if one accounts for the downstream accuracy. 
We believe that these results serve as a motivation for further investigation on the practical benefits of disentangled representations, especially in the context of fairness. Finally, we argue that fairness should be among the desired properties of general purpose representation learning beyond VAEs~\cite{dumoulin2016adversarially,poole2019variational}. As we highlighted in this paper, it appears possible to learn representations that are both useful, interpretable and fairer. Progress on this problem could allow machine-learning driven decision making to be both better and fairer.

\section*{Acknowledgements}
The authors thank Sylvain Gelly and Niki Kilbertus for helpful discussions and comments.
Francesco Locatello is supported by the Max Planck ETH Center for Learning Systems, by an ETH core grant (to Gunnar R\"atsch), and by a Google Ph.D. Fellowship.
This work was partially done while Francesco Locatello was at Google Research Zurich. Gabriele Abbati acknowledges funding from Google Deepmind and the University of Oxford. Tom Rainforth is supported in part by the European Research Council under the European Union’s Seventh Framework Programme (FP7/2007–2013) / ERC grant agreement no. 617071 and in part by EPSRC funding under grant EP/P026753/1.

\bibliography{main}
\bibliographystyle{plain}
\clearpage
\appendix
\section{Proof of Theorem~\ref{thm:perfect_fairness}}
\label{sec:proof}
\perfect*
\begin{proof}
Our proof is by counter example. We present a simple case for which $\rvy$ is predicted from $\rvx$ in such a way that $p(\hat\rvy | \rvx) = p(\rvy | \rvx)$, but which does not satisfy demographic parity. 

We assume all variables to be Bernoulli-distributed $p(\rvs =1)= q, \, 0<q<1$ and $p(\rvy =1)= b, \, 0<b<1$ and our mixing mechanism to be $\rvx = \min(\rvy, \rvs)$. The assumption of demographic parity yields:
\begin{align*}
DP &\implies p(\hat \rvy = 1 | \rvs = 1) = p(\hat \rvy = 1 | \rvs = 0) \\ 
&\implies \sum_{\rvx \in \{0,1\}}p(\hat \rvy = 1, \rvx | \rvs = 1) = \sum_{\rvx \in \{0,1\}} p(\hat \rvy = 1 , \rvx| \rvs = 0) 
\end{align*}
where the first implication comes from the definition of demographic parity.
Using the causal Markov condition~\cite{PetJanSch17}, we can rewrite $p(\hat \rvy = 1, \rvx | \rvs) = p(\hat \rvy = 1| \rvx) p(\rvx | \rvs)$ and thus
\begin{align*}
DP &\implies \sum_{\rvx \in \{0,1\}}p(\hat \rvy = 1| \rvx) p(\rvx | \rvs = 1) = \sum_{\rvx \in \{0,1\}} p(\hat \rvy = 1| \rvx) p(\rvx | \rvs = 0)
\end{align*}
The rest of the proof follows as a proof by contradiction.  Assuming that the classifier is perfect, i.e., $p(\hat\rvy | \rvx) = p(\rvy | \rvx)$, we have
\begin{align*}
DP &\implies \sum_{\rvx \in \{0,1\}}p( \rvy = 1| \rvx) p(\rvx | \rvs = 1) = \sum_{\rvx \in \{0,1\}} p( \rvy = 1| \rvx) p(\rvx | \rvs = 0) \\
&\implies \sum_{\rvx \in \{0,1\}}p( \rvy = 1| \rvx) \left[ p(\rvx | \rvs = 1) - p(\rvx | \rvs = 0)\right] = 0\\
&\implies p( \rvy = 1| \rvx = 0) \left[ p(\rvx = 0 | \rvs = 1) - p(\rvx = 0 | \rvs = 0)\right]+ \\&\qquad\qquad p( \rvy = 1| \rvx = 1) \left[ p(\rvx = 1 | \rvs = 1) - p(\rvx = 1| \rvs = 0)\right] = 0. \\
\intertext{At this point, using the fact that $\rvx = \min(\rvy, \rvs)$, we have $p(\rvx = 0 | \rvs = 1)=p(\rvy=0)$,~ $p(\rvx = 0 | \rvs = 0)=1$,~ $p( \rvy = 1| \rvx = 1)=1$,~ $p(\rvx = 1 | \rvs = 1) = p(\rvy=1)$, ~and~ $p(\rvx = 1| \rvs = 0)=0$, therefore}
DP &\implies p( \rvy = 1| \rvx = 0) \left[ p(\rvy = 0) - 1\right]+1\cdot\left[ p(\rvy = 1) - 0\right] = 0\\
&\implies -b \cdot p( \rvy = 1| \rvx = 0)+b = 0\\
&\implies p( \rvy = 1| \rvx = 0) = 1 \\
&\implies p( \rvx = 0 | \rvy = 1) p(\rvy = 1) = p( \rvx = 0) \\
&\implies p( \rvs = 0) p(\rvy = 1) = p( \rvx = 0) \\
&\implies (1-q) b = (1-q)+q(1-b) \\
&\implies 0 = (1-q)(1-b)+q(1-b) \\
&\implies b=1
\end{align*}
Hence we have our desired contradiction as, by assumption, $b<1$.
\end{proof}

\section{Additional Results}\label{app:additional_res}
In Figure~\ref{fig:Unfairness_vs_disentanglement}, we plot the unfairness of the classifier against the disentanglement of the representation measured with all the different disentanglement metrics. We observe that unfairness and disentanglement appear to be generally correlated with the exception of Modularity and, partially, of the SAP score. This plot extends Figure~\ref{fig:Unfairness_vs_disentanglement_single} to all disentanglement scores.

\begin{figure}
    \centering
    \includegraphics[width=\textwidth]{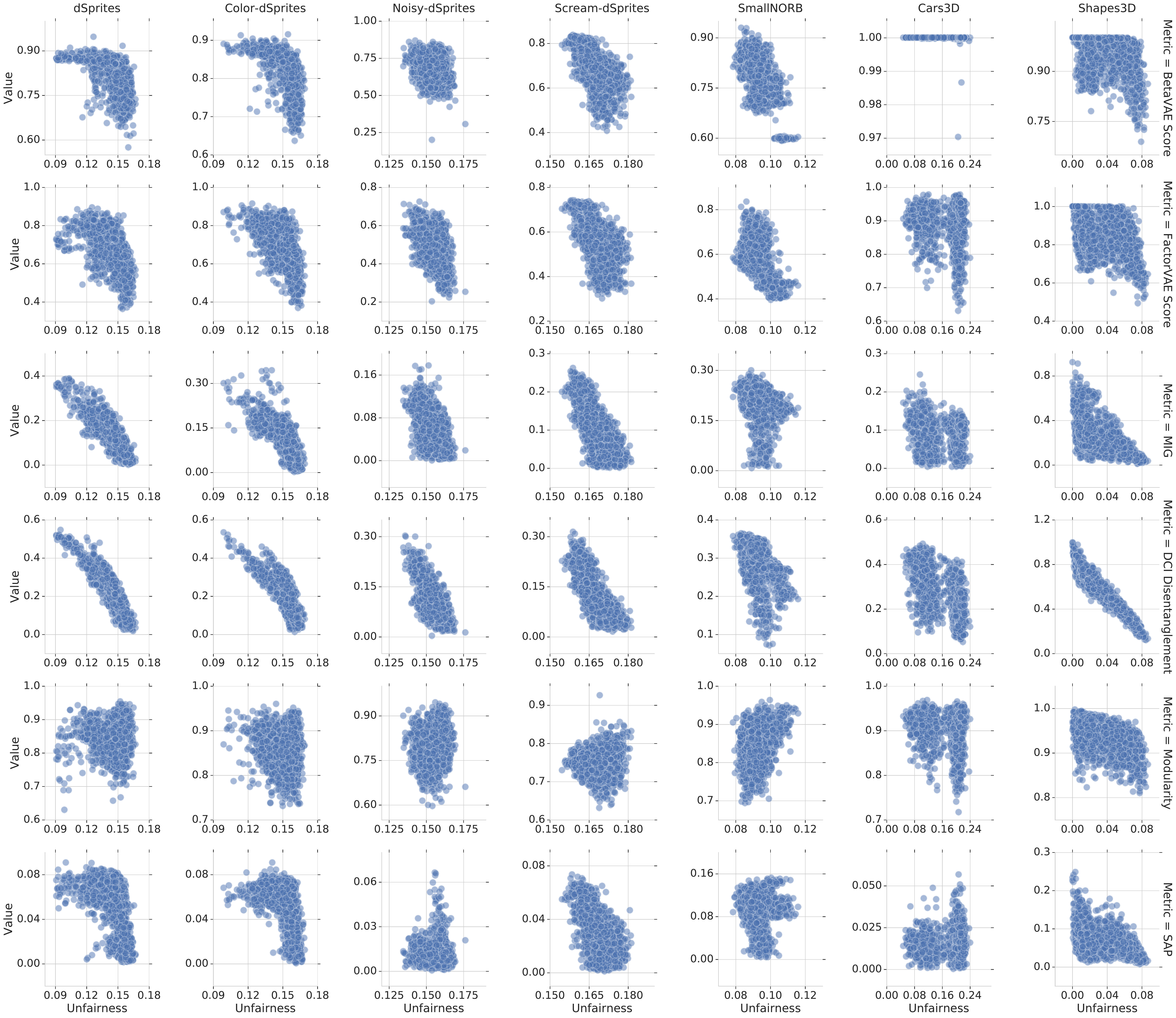}
    \caption{Unfairness of representations versus disentanglement scores on the different data sets.}
    \label{fig:Unfairness_vs_disentanglement}
\end{figure}

In Figure~\ref{fig:adj_metrics}, we plot the rank correlation between the adjusted metrics. We observe a similar correlation to the one observed in~\cite{locatello2018challenging} with the non-adjusted metrics. In Figure~\ref{fig:adj_metrics_vs_metrics}, we plot the rank correlation between adjusted and non-adjusted metrics. These plots extend Figure~\ref{fig:adjusted_scores} to all data sets. We conclude that the correlation between the disentanglement metrics is not exclusively driven by the downstream performance and the adjusted metrics are suitable to discuss the fairness of the representation independently from the classification accuracy of the downstream classifier. 

\begin{figure}
   \centering
    \includegraphics[width=\textwidth]{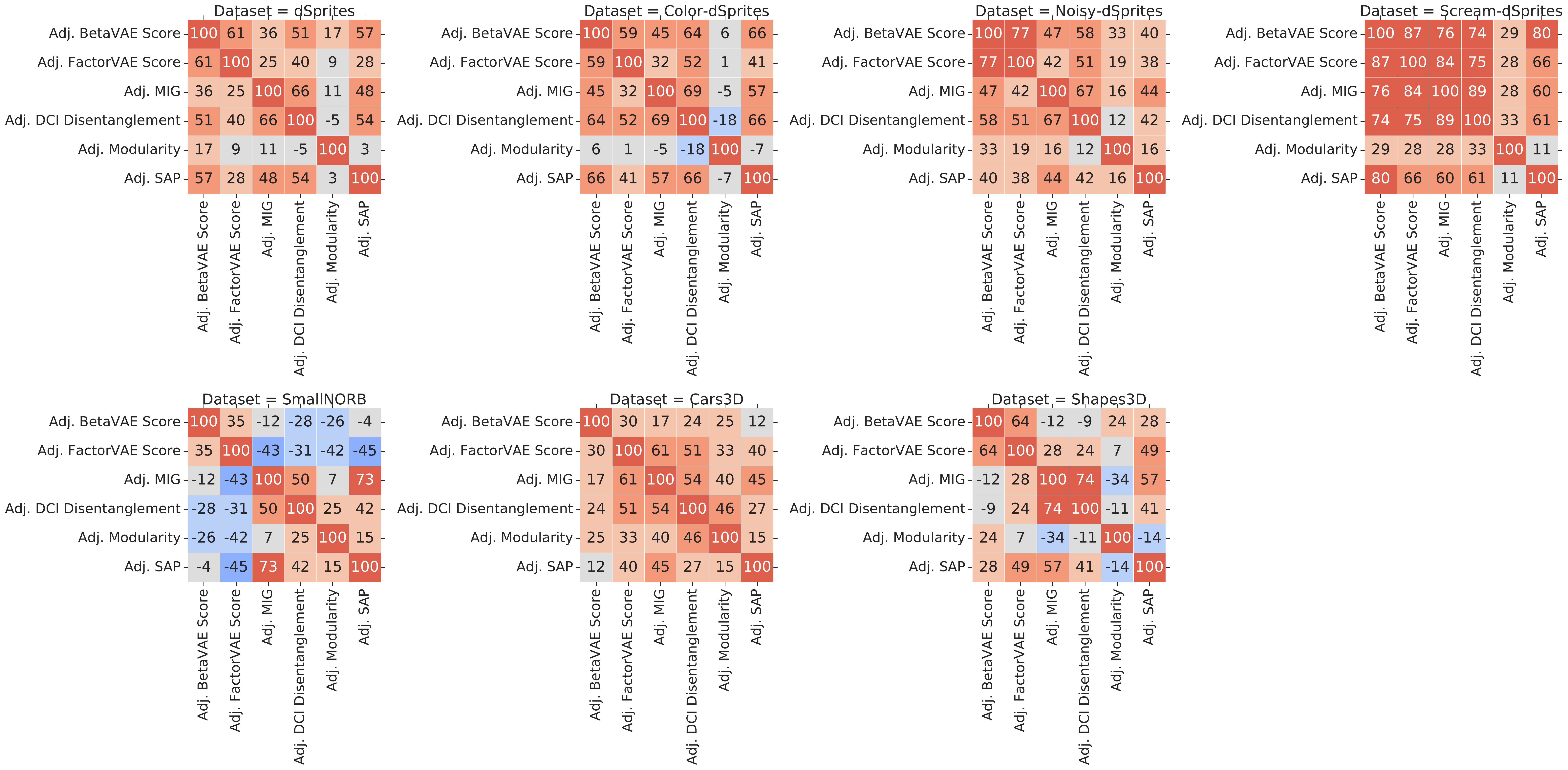}
   \caption{Rank correlation between the adjusted disentanglement scores.}
    \label{fig:adj_metrics}
    \centering
    \includegraphics[width=\textwidth]{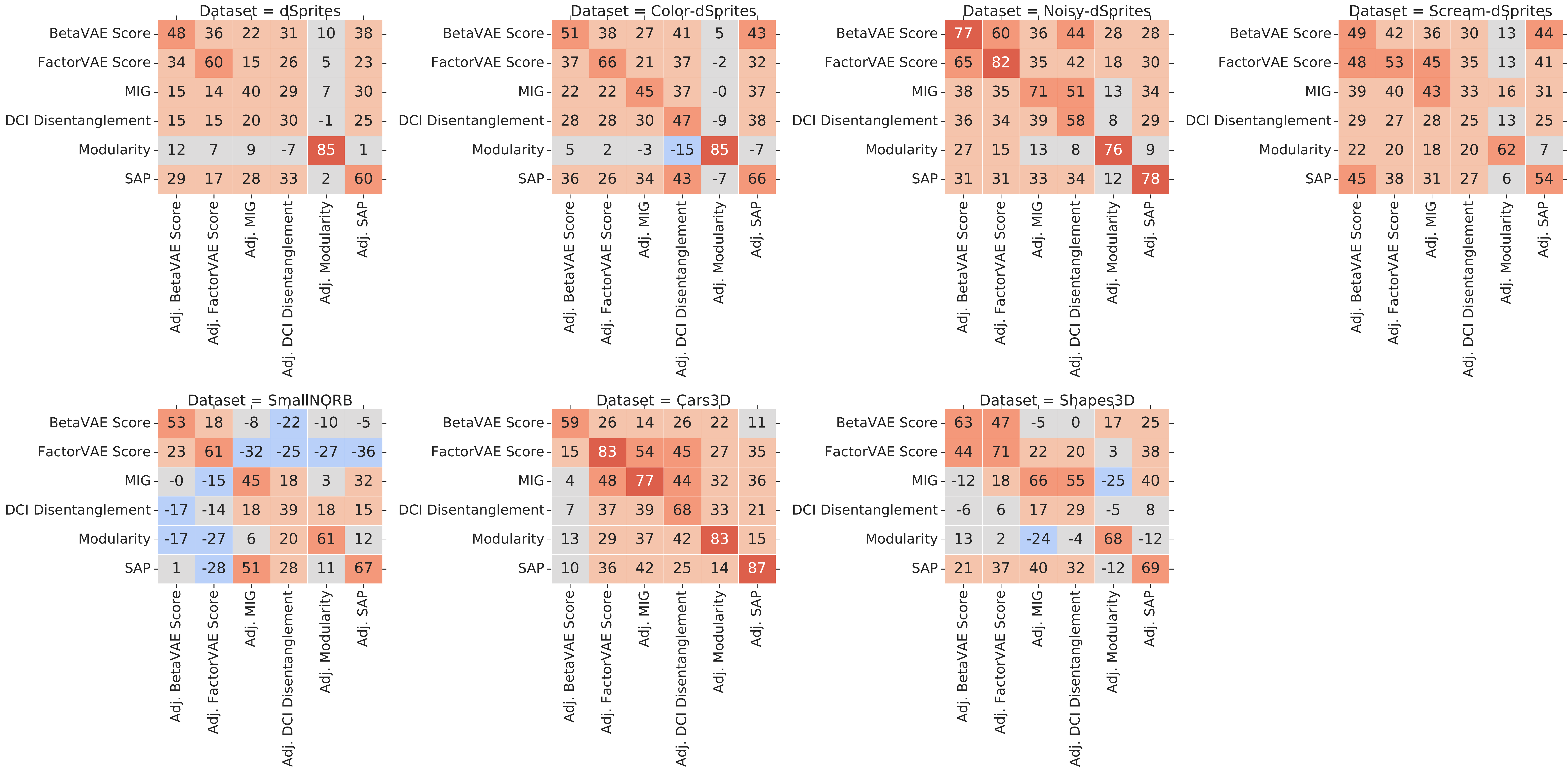}
    \caption{Rank correlation between disentanglement scores and the adjusted version.}
    \label{fig:adj_metrics_vs_metrics}
\end{figure}

\end{document}